\newcommand\fs@betterruled{%
  \def\@fs@cfont{\bfseries}\let\@fs@capt\floatc@ruled
  \def\@fs@pre{\vspace{1pt}\hrule height0.7pt depth0pt \vspace{1pt}}%
  \def\@fs@post{\hrule\relax\vspace{-15pt}}%
  \def\@fs@mid{\vspace{1pt}\hrule}%
  \let\@fs@iftopcapt\iftrue}
\DeclareMathOperator*{\argmax}{arg\hspace{-1pt}\,max}
\newtheorem{theorem}{Theorem}
\newtheorem{lemma}{Lemma}
\newtheorem{definition}{Definition}
\newtheorem{problem}{Problem}
\newtheorem{example}{Example}
\title{\LARGE \bf
Model-Free Learning of Safe yet Effective Controllers
}
\author{Alper Kamil Bozkurt, Yu Wang, and Miroslav Pajic%
\thanks{This work is sponsored in part by the ONR under agreements N00014-17-1-2504, N00014-20-1-2745 and N00014-18-1-2374, AFOSR award number FA9550-19-1-0169, and the NSF CNS-1652544 grant.}%
\thanks{Alper Kamil Bozkurt, Yu Wang, and Miroslav~Pajic are with Duke University, Durham, NC 27708, USA, {\tt\small \{alper.bozkurt, yu.wang094,  miroslav.pajic\}@duke.edu}}}
\begin{document}

\maketitle
\thispagestyle{empty}
\pagestyle{empty}

\begin{abstract}
We study the problem of learning \emph{safe} control policies that are also \emph{effective}; i.e., maximizing the probability of satisfying a linear temporal logic (LTL) specification of a task, and the discounted reward capturing the (classic) control performance. We consider unknown environments 
modeled as Markov decision processes. We propose a model-free reinforcement learning algorithm that learns a policy that first maximizes the probability of ensuring safety, then the probability of satisfying the given LTL specification and lastly, the sum of discounted Quality of Control rewards. Finally, we illustrate applicability of our RL-based approach. %
\end{abstract}

\section{Introduction}
\label{sec:intro}
Many sequential decision making problems involve multiple objectives. For example, a robotic task might require surveillance without hitting the walls while minimizing the energy consumption. %
Quality of Control (QoC) can be often inherently expressed via scalar reward signals where the overall performance of a control policy (i.e., QoC) is measured by the sum of discounted collected rewards (returns)~\cite{sutton2018}. 
Although these return maximization objectives are convenient for learning policies, the rewards may not be readily available and crafting them may not be trivial.

Alternatively, the high-level control objectives can be formulated using linear temporal logic (LTL) \cite{baier2008}. LTL~provides a high-level intuitive language to formally specify~desired behaviors of a control task. For a given LTL specification, the objective can be described as finding a policy maximizing the probability that the specification is satisfied. Synthesis of such policies directly from LTL specifications has attracted significant interest (e.g.,~\cite{
kress2009, 
plaku2016
}).
Although LTL specifications can naturally express many characteristics of interest such as safety, sequencing, conditioning, persistence and liveness; optimization of a quantitative performance measure usually cannot be captured by an LTL specification.

Consequently, in this work, we focus on  control design %
problems where the objectives include satisfaction of desired properties expressed as LTL specifications as well as maximization of performance criteria represented by QoC rewards. In our problem setting, the LTL objectives are prioritized above the QoC objectives. In addition, we separate the safety specifications from the rest of the LTL specification as ensuring control safety is usually of the utmost~importance.

The control design problem in stochastic environments has been widely studied for multiple return objectives (e.g., \cite{roijers2013} and references therein), as well as 
for safety and LTL specifications \cite{svorevnova2016, hahn2019b, chatterjee2020, kalagarla2020}. %
Similarly, shielding-like approaches, which prioritize the safety objectives above the QoC objectives, have been introduced (e.g., \cite{
alshiekh2018,
avni2019
}).
These approaches require some partial knowledge such as a topology or an abstraction of the environment to determine unsafe actions to be eliminated or overridden. This allows for the use of reinforcement learning (RL) algorithms to learn an optimal policy subject to the safety constraints. Unfortunately, when the environment is completely unknown, these synthesis or shielding approaches cannot be directly used. %

Recently, there has been a considerable interest in the use of RL algorithms to learn policies for LTL objectives (e.g. \cite{
fu2014, %
icarte2018,
hahn2019a,
bozkurt2020
}). However, only few RL algorithms for multi-objectives with LTL specifications have been proposed for unknown environments. Studies~\cite{li2019b,aksaray2021}  considered time-bounded LTL specifications; \cite{kvretinsky2018}  proposed a model-based RL method that, under some assumptions about the environment, converges to a near optimal policy for an average QoC reward objective subject to the specifications;
\cite{hammond2021} introduced a model-free RL approach for multiple LTL~objectives,  maximizing the weighted sum of the satisfaction probabilities.

In this work, we introduce an RL algorithm for safety, LTL and QoC objectives with a lexicographic order where the safety objectives have the highest priority, while the LTL objectives have higher priority than the QoC-return objectives. 
As we demonstrate in a case study, 
it is crucial to consider the satisfaction of safety and LTL specifications as separate objectives with different priorities rather than the satisfaction of a single combined LTL specification because
such approach might result in a policy under which the probability of satisfying the safety specification is significantly reduced. 
We show that our algorithm converges to a policy maximizing the probability that the LTL specification is satisfied under the constraint that the probability of satisfying the safety specification is maximized. Furthermore, the derived policy is near-optimal in terms of QoC returns that can be obtained while having these maximum satisfaction probabilities. Our method is \textit{\textbf{completely model-free}} and does not rely on any assumptions about the environment and 
the specifications.

\section{Preliminaries and Problem Statement}
\label{sec:problem}

\subsection{Markov Decision Processes}
\vspace{-1pt}
We consider the stochastic environments modeled as Markov decision processes (MDPs).

\vspace{-2pt} 
\begin{definition} \label{def:mdps}
A (labeled) MDP is a tuple $\mathcal{M}{=}(S, s_0, A, P,\allowbreak  R, \gamma, \textnormal{AP}, L)$ where 
$S$ is a finite set of states;
$s_0$ is the initial state;
$A$ is a finite set of actions and $A(s)$ denotes the set of actions allowed in a state $s$;
$P: S \times A \times S \mapsto [0,1]$ is a probabilistic transition function such that $\sum_{s'\in S} P(s,a,s')=1$ if $a \in A(s)$, and $0$ otherwise;
$R: S \mapsto \mathbb{R}$ is a reward function;
$\gamma \in [0,1)$ is a discount factor;
\textnormal{AP} is a finite set of atomic propositions; and 
$L:S\mapsto 2^\textnormal{AP}$ is a labeling function.
\end{definition}

In a state $s$, the controller takes an action $a \in A(s)$ and makes a transition to a state $s'$ with probability (w.p.) $P(s,a,s')$, receiving a reward of $R(s')$ indicating the QoC, and a label $L(s')\subseteq AP$, a set of state properties.  The actions to be taken by the controller are determined by a \emph{policy}. In this study, we are interested in policies with limited memory.

\vspace{-2pt} 
\begin{definition} \label{def:policies}
A \textbf{finite-memory policy} for an MDP $\mathcal{M}$ is a tuple $\pi=(M,m_0,T,\mathfrak{A})$ where $M$ is a finite set of modes (memory states); $m_0$ is the initial mode; $T:M \times S \times M\mapsto [0,1]$ is a probabilistic transition function such that $T(m,s,m')$ is the probability that the policy switches to the mode $m'$ after visiting the state $s$ while operating in the mode $m$;
$\mathfrak{A}:M\times S \times A \mapsto [0,1]$ is a function that maps a given mode $m\in M$ and a state $s\in S$ to the probability that $a$ is taken in $s$ when the current mode is $m$. A finite-memory strategy is called \textbf{pure memoryless} if there is only one mode ($|M|=1$) and $\mathfrak{A}(m_0,s,a)$ is a point distribution assigning a probability of 1 to a single action in all states $s\in S$.
\end{definition}

Under a finite-memory policy, the action to be taken not only depends on the current state but also the current mode. After each transition, the policy may change its mode based on the visited state. The infinite sequence of states visited during an execution of the policy is called a \emph{path}, denoted by $\rho=s_0s_1\dots$, where $s_t$ is the state visited at time step $t$. For a given path $\rho$, we use $\rho[t]$ and $\rho[t{:}]$ to denote the state $s_t$ and the suffix $s_ts_{t+1}...$, respectively. The paths can be considered as sample sequences drawn from the \emph{Markov chain} (MC), denoted by $\mathcal{M}_\pi$, induced by the followed policy~$\pi$ in an MDP $\mathcal{M}$; the state space of the induced MC is composed of the $S$ and $M$, and the transitions are governed by $P$, $T$ and $\mathfrak{A}$.
We use $\mathcal{M}_\pi$ to denote the induced MC and $\rho \sim \mathcal{M}_\pi$ to denote a path randomly sampled from $\mathcal{M}_\pi$.

The QoC \emph{return} of a path $\rho$, denoted by $G(\rho)$, is the sum of discounted QoC rewards collected on the path~$\rho$ -- i.e.,
\begin{align}
    G(\rho) \coloneqq \sum\nolimits_{t=0}^\infty \gamma^tR(\rho[t]); \label{eq:return}
\end{align}
the QoC \emph{return objective} is to find a policy under which the expected QoC return of a path is maximized. When the reward and the probabilistic transition functions are unknown the optimal policies can be obtained using RL methods~\cite{sutton2018}.

\vspace{-1pt}
\subsection{Linear Temporal Logic}
\vspace{-1.2pt}
We adopt LTL to specify the desired properties of a control policy. LTL specifications can be formed according to the following grammar:
\begin{align}
    \hspace{-0.46em}\varphi \coloneqq  \mathrm{true} \mid a \mid \varphi_1 \wedge \varphi_2 \mid \neg \varphi \mid \bigcirc \varphi \mid \varphi_1 \textsf{U} \varphi_2, ~ {a\in\textnormal{AP}}.
\end{align} 

The semantics of an LTL formula $\varphi$ are defined over paths. A path $\rho$ either satisfies $\varphi$, denoted by $\rho \models \varphi$, or not ($\rho \not\models \varphi$). The satisfaction relation is recursively defined as follows: $\rho \models \varphi$ 
if $\varphi=a$ and $L(\rho[0])=a$;
if $\varphi=\varphi_1 \wedge \varphi_2$, $\rho \models \varphi_1$ and $\rho \models \varphi_2$;
if $\varphi=\neg \varphi'$ and $\rho \not\models \varphi'$;
if $\varphi=\bigcirc\varphi'$ (\emph{next} $\varphi'$) and $\rho[1{:}]\models\varphi'$;
if $\varphi=\varphi_1\textsf{U}\varphi_2$ ($\varphi_1$ \emph{until} $\varphi_2$) and there exists $t\geq0$ such that $\rho[t{:}]\models \varphi_2$ and for all $0\leq i<t$, $\rho[i{:}]\models \varphi_1$. 
The Boolean operators \emph{or} ($\vee$) and \emph{implies} ($\to$) can be easily obtained via: 
$\varphi_1 \vee \varphi_2 \coloneqq \neg(\neg\varphi_1 \wedge \neg\varphi_2)$, $\varphi_1\to \varphi_2 \coloneqq \neg \varphi_1 \vee \varphi_2$. 
In addition, we can derive the commonly used temporal operators \emph{eventually} ($\lozenge$) and \emph{always} ($\square$) using: 
$\lozenge \varphi \coloneqq \mathrm{true} \ \textsf{U}\ \varphi$, and 
$\square \varphi \coloneqq \neg (\lozenge \neg \varphi)$.

For an LTL formula, a limit-deterministic Büchi automaton (LDBA) that only accepts the paths satisfying the formula can be automatically constructed \cite{sickert2016}.

\vspace{-2pt} 
\begin{definition} \label{def:ldba}
An LDBA of an LTL formula $\varphi$ is a tuple $\mathcal{A}=(Q,  q_0, \Sigma, \delta, B)$ where $Q$ is a finite set of automata states, $q_0 \in Q$ is an initial state, $\Sigma$ is a finite alphabet, $\delta: Q \times (\Sigma \cup \{\varepsilon\}) \mapsto 2^Q$ is a transition function, and $B\subseteq Q$ is a set of \emph{accepting states}. The set $\delta(q, \theta)$ is singleton (i.e., $|\delta(q, \theta)| = 1$)  for all $q \in Q$ and  $\theta \in \Sigma$. The set $Q$ can be partitioned into an initial and an accepting component, namely $Q_I$ and $Q_A$, such that all the accepting states are in $Q_A$ (i.e., $B \subseteq Q_A$), and $Q_A$ does not have any $\varepsilon$ or outgoing transitions; i.e., for any $q\in Q_A$,
\begin{equation}
    \delta(q, \theta) \subseteq \begin{cases}
        \varnothing & \textnormal{ if } \theta=\varepsilon, \\
        Q_A & \textnormal{ if } \theta \in \Sigma.
    \end{cases}
\end{equation}
A path $\rho$ in an MDP $\mathcal{M}$ is accepted if there is an execution $\sigma=q_0q_1\dots$ such that $q_{t+1} \in \delta(q_t, L(\rho[t])) \cup \delta(q_t,\varepsilon)$ for all $t\geq0$ and $\textnormal{Inf}(\sigma) \cap B \neq \varnothing$, where $\textnormal{Inf}(\sigma)$ denotes the set of automata states visited by $\sigma$ infinitely many times.
\end{definition}

We write $\mathcal{A}_\varphi$ to denote an LDBA constructed by an LTL formula $\varphi$. All the accepting states of an LDBA $\mathcal{A}_\varphi$ are in the accepting component and $\varphi$ is satisfied by a path if and only if the labels of states on the path trigger an execution that visits some of these accepting states infinitely often.
The only nondeterministic transitions in an LDBA are the outgoing $\epsilon$-transitions from the states in the initial component $Q_I$.
When an LDBA consumes the label of a state $L(s)$, it deterministically transitions from its current state $q$ to the state in the singleton set $\delta(q,L(s))$ if $\delta(q,\varepsilon)$ is empty. Once the accepting component $Q_A$ is reached, it is not possible to make a nondeterministic transition.
LDBAs can be constructed in a way that any $\epsilon$-transition leads to a state in the accepting component, allowing for at most one $\epsilon$-transition in an execution. 
These LDBAs are called \emph{suitable}, as they facilitate quantitative model-checking of MDPs~\cite{sickert2016}, and we henceforth assume that any given LDBA is suitable.

\textbf{\emph{Safety LTL}} is an important fragment of LTL, which can be used to specify the properties ensuring ``something bad never happens''. For instance, any LTL formula in a positive normal form without any temporal modalities other than $\bigcirc$ and $\square$ is a safety formula \cite{kupferman2001}. Safety formulas can be translated into simpler automata where the transitions are deterministic and all non-accepting automata states are absorbing, meaning visiting a non-accepting automata state results in rejection. We refer these automata as \emph{safety automata} \cite{kupferman2001}.

\subsection{Problem Formulation}
In this work, we focus on %
RL for three lexicographically ordered objectives: \emph{safety} (primary), \emph{LTL} (secondary) and \emph{QoC} (tertiary). In our problem setting, the safety constraints are specified as an LTL safety formula and meeting these constraints is of highest priority. The other desired temporal properties are expressed as a general LTL formula and we are interested in policies satisfying the LTL formula with the highest probability. Finally, among such policies we want to learn the ones maximizing the expected QoC return. Hence, we formally state our problem as follows.

\begin{problem} \label{problem}
For a given MDP $\mathcal{M}$ where the transition probabilities $P$ and the rewards $R$ are unknown, a safety specification $\psi$ and an LTL specification $\varphi$, design a model-free RL algorithm that learns a policy $\pi_{\psi,\varphi}^R \in \Pi_{\psi,\varphi}^R$ where
\begin{align}
    \Pi_{\psi,\varphi}^R &\coloneqq \argmax_{\pi \in \Pi_{\psi,\varphi}} \mathbb{E}_{\rho \sim \mathcal{M}_{\pi}}\left[G(\rho)\right], \label{eq:safety_ltl_optimal_policies} \\
    \Pi_{\psi,\varphi} &\coloneqq \argmax_{\pi \in \Pi_\psi} Pr_{\rho \sim \mathcal{M}_{\pi}}\left\{\rho \mid \rho \models \varphi \right\}, \label{eq:safety_ltl_policies} \\
    \Pi_\psi &\coloneqq \argmax_{\pi \in \Pi} Pr_{\rho \sim \mathcal{M}_{\pi}}\left\{\rho \mid \rho \models \psi \right\}, \label{eq:safety_policies}
\end{align}
and $\Pi$ denotes the set of all finite-memory policies.
\end{problem}

Intuitively, our objective is to learn a \emph{lexicographically optimal} policy $\pi_{\psi,\varphi}^R$ that first maximizes the probability of ensuring the safety, then the probability of satisfying the LTL specification and lastly, the sum of discounted QoC rewards. However, such optimal policies might not always exist as illustrated in the following example.

\vspace{-2pt} \begin{example}\label{example:no_optimal}
Consider the MDP from Fig.~\ref{fig:no_optimal} where the LTL specification $\varphi$ is $\square\lozenge b$ and there is no safety specification (i.e., $\psi=\mathrm{true}$). The policy $\pi^*$ choosing $\beta_2$ in $s_0$ maximizes the QoC return. Now, let $\pi_{\psi,\varphi}^R$ be a policy in the set $\Pi_{\psi,\varphi}^R$ defined in \eqref{eq:safety_ltl_optimal_policies}. The expected value of the QoC return obtained under $\pi_{\psi,\varphi}^R$ is strictly less than the one obtained under $\pi^*$ because $\beta_1$ needs to be eventually taken by $\pi_\varphi^*$ to satisfy $\varphi$, which leads to a reward of zero. Thus, a policy that randomly follows either  $\pi^*$ or $\pi_{\psi,\varphi}^R$ in $s_0$ increases the expected QoC return; the more often $\beta_2$ is chosen, the higher QoC return is obtained. In addition, such a mixed policy satisfies $\varphi$ as well, because the mixed policy takes $\beta_1$ in $s_1$ with some positive probability due to $\pi_{\psi,\varphi}^R$. As a result, we obtain a policy better than $\pi_{\psi,\varphi}^R$, which leads to a contradiction.
\end{example}

\begin{figure}
    \centering
\scalebox{0.76}{
\begin{tikzpicture}[
state/.style={circle, draw=black, very thick, minimum size=30pt, inner sep=1pt},
action/.style={circle, inner sep=0pt}]
\node[state]  (s0) {\shortstack{{\Large $s_0$}\\ \{\},+1}};
\node[action] (b1) [right=of s0] {$\beta_1$};
\node[action] (b2) at (1,-1) {$\beta_2$};
\node[state]  (s1) [right=of b1] {\shortstack{{\Large $s_1$}\\ \{b\},0}};
\node[action] (b3) at (5,1) {$\beta_3$};
\draw[->, very thick] (-1,0) -- (s0.west);
\draw[very thick] (s0.east) -- (b1.west);
\draw[->, very thick] (b1.east) -- (s1.west);
\draw[very thick] (s0.east) .. controls +(right:10pt) and +(up:10pt).. (b2.north);
\draw[->, very thick] (b2.west) .. controls +(left:10pt) and +(down:10pt).. (s0.south);
\draw[very thick] (s1.east)  .. controls +(right:10pt) and +(down:20pt).. (b3.south);
\draw[->, very thick] (b3.west) .. controls +(left:10pt) and +(up:20pt).. (s0.north);
\end{tikzpicture}
}
\vspace{-5pt}
    \caption{An example deterministic MDP. The circles represent the states and the arrows represent the actions.  The labels and QoC rewards of the states $s_0$ and $s_1$ are $\{\},{+}1$ and $\{b\},0$ respectively.}
    \label{fig:no_optimal}
    \vspace{-15pt}
\end{figure}

Consequently, we consider \emph{near-optimality} for the QoC objective. Specifically, for a given $\nu{>}0$, a policy $\pi_{\psi,\varphi}^{R\nu}$ is called $\nu$-optimal if it belongs to the set $\Pi_{\psi,\varphi}^{R\nu}$ defined as
\vspace{5pt}
\begin{align}
    \Pi_{\psi,\varphi}^{R\nu} \coloneqq \big\{\pi \in \Pi_{\psi,\varphi}  \ \big| \  v_{\psi,\varphi}^R - \mathbb{E}_{\rho \sim \mathcal{M}_{\pi}}\left[G(\rho)\right] \leq \nu \big\}, \label{eq:safety_ltl_near_optimal_policies}
\end{align}

\vspace{5pt}
\noindent where $v_{\psi,\varphi}^R \coloneqq \sup_{\pi \in \Pi_{\psi,\varphi}} \mathbb{E}_{\rho \sim \mathcal{M}_{\pi}}\left[G(\rho)\right]$. We note that the supremum exists as we assume the rewards are bounded.

\section{Model-free RL for Lexicographic Safety, LTL and QoC objectives}
\label{sec:mfRL}

In this section, we introduce our model-free RL algorithm that learns lexicographically near-optimal policies defined in \eqref{eq:safety_ltl_near_optimal_policies}. Our algorithm first constructs a product MDP by composing the initial MDP with the automata of the given safety and LTL specifications, where the satisfaction of these specifications is reduced into meeting the acceptance conditions of the automata; it then uses the safety and the LTL rewards crafted for the acceptance conditions as described in~\cite{bozkurt2020} and the QoC rewards of the original MDP to learn the policies via a model-free approach similar to~\cite{gabor1998}.

\subsection{Product MDP Construction}
The first step in our approach is to construct a product MDP of the given MDP and the automata derived from the given safety and LTL specifications. The product MDP is merely a representation of the synchronous execution of the MDP and the automata. Thus, even if the transition graph and probabilities of the MDPs are unknown, the product MDP can be conceptually constructed; i.e., the resulting probabilities, and the transition graph, will be unknown.

\begin{definition}
A product MDP of an MDP $\mathcal{M}$, a safety automaton $\mathcal{A}_\psi$, and an LDBA $\mathcal{A}_\varphi$ is a tuple $\mathcal{M}^\times=(S^\times,s_0^\times, A^\times, P^\times, R^\times, \gamma, B_\psi^\times, B_\varphi^\times)$ such that 
$S^\times = S \times Q_\psi \times Q_\varphi$ is the set of product states;
$s_0^\times = \langle s_0, q_{0\psi}, q_{0\varphi}\rangle$ is the initial product state;
$A^\times = A \cup \{\epsilon_{q_\varphi}\mid q_\varphi\in Q_\varphi\}$ where $A^\times(\langle s, q_\psi, q_\varphi \rangle)$ denotes the set $A(s) \cup  \{\epsilon_{q_\varphi'}\mid q_\varphi'\in \delta_\varphi(q_\varphi,\varepsilon)\}$;
$P^\times:S^\times\times A^\times \times S^\times \mapsto [0,1]$ is the transition function such that $P^\times(\langle s, q_\psi, q_\varphi \rangle,a,\langle s', q_\psi', q_\varphi'\rangle)=$
\begin{align}
\begin{cases}
    P(s,a,s') & \textnormal{if } a {\in} A(s),  q_\psi' {\in} \delta_\psi(q_\psi,L(s)), q_\varphi' {\in} \delta_\varphi(q_\varphi,L(s)), \\
    1 & \textnormal{if } a {=} \epsilon_{q_\varphi'}, q_\varphi' {\in}\delta_\varphi(q_\varphi,\varepsilon), q_\psi'{=}q_\psi, s{=}s', \\
    0 & \textnormal{otherwise;}
\end{cases}\notag %
\end{align}
$R^\times: S^\times \mapsto \mathbb{R}$ is the reward function such that $R^\times(\langle s, q_\psi, q_\varphi \rangle)=R(s)$;
$\gamma$ is the discount factor;
$B_\psi^\times = S \times B_\psi \times Q_\varphi$ is the set of accepting states for $\psi$;
$B_\varphi^\times = S \times Q_\psi \times B_\varphi$ is the set of accepting states for $\varphi$.
\end{definition}

Due to the $\varepsilon$-transitions in $\mathcal{A}_\varphi$, $\mathcal{M}^\times$ has $\varepsilon$-actions in addition to the actions in $\mathcal{M}$. Taking an $\varepsilon$-action $\varepsilon_{q_\varphi}$ does not cause an actual transition in $\mathcal{M}$; it only makes $\mathcal{A}_\varphi$ to move to $q_\varphi$. A policy $\pi^\times$ for $\mathcal{M}^\times$ induces a finite-memory policy $\pi$ for $\mathcal{M}$ where the automata states and transitions act as the memory mechanism. The mode of the induced policy $\pi$ at any time step is represented by the joint state $\langle q_\psi, q_\varphi \rangle$ of the states that $\mathcal{A}_\psi$ and $\mathcal{A}_\varphi$ are in. When a state~$s$ is visited, $\pi$ switches its mode to $\langle q_\psi', q_\varphi' \rangle$, where $q_\psi'$ and $q_\varphi'$ are the automata states that $\mathcal{A}_\psi$ and $\mathcal{A}_\varphi$ transition to after consuming $L(s)$. The induced policy $\pi$ then takes the action that $\pi^\times$ takes in $\langle s, \langle q_\psi', q_\varphi' \rangle \rangle$. %

A path $\rho^\times$ in $\mathcal{M}^\times$ satisfies the \emph{safety condition}, denoted by $\rho^\times \models \square B_\psi^\times $, if $\rho^\times$ only visits the product states in $B_\psi^\times$. This means that its induced path $\rho$ in $\mathcal{M}$ is accepted by $\mathcal{A}_\psi$ and thereby satisfying the safety specification $\psi$. Similarly, if $\rho^\times$ visits some product states in $B_\varphi^\times$ infinitely many times, $\rho^\times$ satisfies the \emph{B\"uchi condition}, denoted by $\rho^\times \models \square \lozenge B_\varphi^\times$, and its induced path $\rho$ satisfies the LTL specification $\varphi$. 
Finally, the QoC return of $\rho^\times$, denoted by $G^\times(\rho^\times)$, is equivalent to the QoC return of its induced path. 

\subsection{Lexicographically Optimal Policies for Product MDPs}

Pure memoryless policies suffice for safety, B\"uchi\cite{baier2008}, and QoC objectives\cite{sutton2018}. However, mixing might be necessary to obtain a lexicographically near-optimal policy defined in~\eqref{eq:safety_ltl_near_optimal_policies}, as illustrated in Example~\ref{example:no_optimal}.
Hereafter, we focus on memoryless, but possibly mixed, policies for the product MDP as their induced policies in the original MDP suffice for lexicographic near-optimality as stated in the lemma.

\vspace{-2pt} 
\begin{lemma} \label{lemma:policies_in_product_mdp}
For a given MDP $M$, a safety automaton $\mathcal{A}_\psi$, an LDBA $\mathcal{A}_\varphi$ and their product MDP  $\mathcal{M}^\times$, let $\Pi_{\psi,\varphi}^{R\nu\times}$, $\Pi_{\psi,\varphi}^\times$ and $\Pi_\psi^\times$ be the sets of policies defined as follows:
\begin{align}
    \hspace{-1px}\Pi_{\psi,\varphi}^{R\nu\times} &{\coloneqq}  \left\{\pi^\times\hspace{-1px} {\in} \Pi_{\psi,\varphi}^\times \big|  v_{\psi,\varphi}^{R\times} {-} \mathbb{E}_{\rho^\times\hspace{-1px} \sim  \mathcal{M}^\times_{\pi^\times}}\hspace{-2px}\left[G^\times\hspace{-1px}(\rho^\times\hspace{-1px})\right]\hspace{-1px} \leq \nu \right\}\hspace{-1px}, \label{eq:safety_ltl_near_optimal_product_policies}
\end{align}
\begin{align}
    \Pi_{\psi,\varphi}^\times &{\coloneqq} \argmax_{\pi^\times \in \Pi_\psi^\times } Pr_{\rho^\times  \sim \mathcal{M}^\times_{\pi^\times}}\big\{\rho^\times \mid \rho^\times {\models} \square \lozenge B_\varphi^\times \big\}, \label{eq:safety_ltl_product_policies} 
\end{align}
\begin{align}
    \Pi_\psi^\times &{\coloneqq} \argmax_{\pi^\times  \in \Pi^\times } Pr_{\rho^\times  \sim \mathcal{M}^\times_{\pi^\times}}\big\{\rho^\times \mid \rho^\times {\models} \square B_\psi^\times \big\}; \label{eq:safety_product_policies}
\end{align}
here, $\Pi^\times$ denotes the set of all memoryless policies for $\mathcal{M}^\times$ and $v_{\psi,\varphi}^{R\times} \coloneqq \sup_{\pi^\times\hspace{-1px} {\in} \Pi_{\psi,\varphi}^\times} \mathbb{E}_{\rho^\times\hspace{-1px} \sim  \mathcal{M}^\times_{\pi^\times}}\hspace{-2px}\left[G^\times\hspace{-1px}(\rho^\times\hspace{-1px})\right]$. Then $\Pi_{\psi,\varphi}^{R\nu\times}$ is non-empty and any policy in $\Pi_{\psi,\varphi}^{R\nu\times}$ induces a policy for $\mathcal{M}$ belonging to the set $\Pi_{\psi,\varphi}^{R\nu}$ defined in \eqref{eq:safety_ltl_near_optimal_policies}.
\end{lemma}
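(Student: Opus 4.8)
The plan is to prove both assertions — non-emptiness of $\Pi_{\psi,\varphi}^{R\nu\times}$ and the inclusion of induced policies into $\Pi_{\psi,\varphi}^{R\nu}$ — by showing that at each of the three lexicographic levels memoryless policies on $\mathcal{M}^\times$ attain the same value as the full class of finite-memory policies on $\mathcal{M}$, and that the induced-policy correspondence transfers this value level by level. Throughout I rely on three \emph{preservation identities} stemming from the product-MDP semantics: if $\pi^\times$ is memoryless on $\mathcal{M}^\times$ and $\pi$ is the finite-memory policy it induces on $\mathcal{M}$ (with $\langle q_\psi,q_\varphi\rangle$ as memory), then $\mathcal{M}_\pi$ and $\mathcal{M}^\times_{\pi^\times}$ carry the same path law on $S\times Q_\psi\times Q_\varphi$, so $Pr_{\mathcal{M}_\pi}\{\rho\models\psi\}=Pr_{\mathcal{M}^\times_{\pi^\times}}\{\square B_\psi^\times\}$, $Pr_{\mathcal{M}_\pi}\{\rho\models\varphi\}=Pr_{\mathcal{M}^\times_{\pi^\times}}\{\square\lozenge B_\varphi^\times\}$, and $\mathbb{E}_{\mathcal{M}_\pi}[G]=\mathbb{E}_{\mathcal{M}^\times_{\pi^\times}}[G^\times]$ (the last being the product-return equivalence noted above, justified by the at-most-one-$\varepsilon$ property of suitable LDBAs). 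The same identities hold for the reverse \emph{tracking map} sending an arbitrary finite-memory $\sigma$ on $\mathcal{M}$ to the product policy obtained by running $\sigma$ while recording the automata states.

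For non-emptiness I argue outward-in. First $\Pi_\psi^\times\neq\varnothing$: since the safety-rejecting states are absorbing, $\square B_\psi^\times$ is a maximal-reachability (avoid-the-sink) objective, for which a pure memoryless optimum exists by \cite{baier2008}. Next $\Pi_{\psi,\varphi}^\times\neq\varnothing$: I characterize $\Pi_\psi^\times$ through the safety value $V_\psi^\times$ — a memoryless policy is maximally safe iff on every state it reaches while still in $B_\psi^\times$ it plays only $V_\psi^\times$-greedy actions, its choices on the (already-unsafe) sink states being unconstrained. I therefore form the sub-MDP $\widehat{\mathcal{M}}^\times$ keeping only the greedy actions on $B_\psi^\times$ but all actions on the sinks; its memoryless policies are exactly the maximally-safe ones (up to behaviour on states never reached while safe), and applying the memoryless-optimality of the Büchi objective \cite{hahn2013} on $\widehat{\mathcal{M}}^\times$ yields a pure memoryless maximizer of $Pr\{\square\lozenge B_\varphi^\times\}$ inside $\Pi_\psi^\times$. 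Finally, since $\Pi_{\psi,\varphi}^\times\neq\varnothing$ and $\nu>0$, the definition of $v_{\psi,\varphi}^{R\times}$ furnishes a policy of $\Pi_{\psi,\varphi}^\times$ whose return exceeds $v_{\psi,\varphi}^{R\times}-\nu$, i.e.\ a member of $\Pi_{\psi,\varphi}^{R\nu\times}$.

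For the inclusion, take $\pi^\times\in\Pi_{\psi,\varphi}^{R\nu\times}$ with induced $\pi$ and verify the three defining conditions of $\Pi_{\psi,\varphi}^{R\nu}$. (i) $\pi\in\Pi_\psi$: the preservation identity gives $Pr_{\mathcal{M}_\pi}\{\psi\}=V_\psi^\times(s_0^\times)$, while for any competing finite-memory $\sigma$ its tracking image is a (possibly history-dependent) product policy with the same safety probability, bounded by $V_\psi^\times(s_0^\times)$ because memoryless policies are safety-optimal on $\mathcal{M}^\times$. (ii) $\pi\in\Pi_{\psi,\varphi}$: analogously $Pr_{\mathcal{M}_\pi}\{\varphi\}$ equals the constrained optimum $W\coloneqq\max_{\Pi_\psi^\times}Pr\{\square\lozenge B_\varphi^\times\}$, and any $\sigma\in\Pi_\psi$ tracks to a maximally-safe product policy which is realizable in $\widehat{\mathcal{M}}^\times$, where memoryless Büchi-maximizers are optimal even against history-dependent policies, hence cannot exceed $W$. (iii) the return bound: the preservation identity gives $\mathbb{E}_{\mathcal{M}_\pi}[G]=\mathbb{E}_{\mathcal{M}^\times_{\pi^\times}}[G^\times]\geq v_{\psi,\varphi}^{R\times}-\nu$, so it remains to prove $v_{\psi,\varphi}^R=v_{\psi,\varphi}^{R\times}$; the $\geq$ inequality is immediate from the induced-policy map, and the $\leq$ inequality is the assertion that memoryless mixed product policies already attain the supremum of the discounted return over all maximally-safe, maximally-Büchi finite-memory policies.

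The main obstacle is exactly this last equality $v_{\psi,\varphi}^R=v_{\psi,\varphi}^{R\times}$, i.e.\ showing that memory beyond the automata state and beyond stationary randomization cannot raise the achievable return once the two probability constraints are imposed. Two features make it delicate: the constraints are \emph{argmax} (equality) constraints on tail-event probabilities rather than on discounted quantities, and — because safety-violating paths still contribute to the Büchi probability — the secondary constraint is not a plain sub-MDP restriction but couples the behaviour on the safe region with that on the sinks. I expect to settle it by passing to stationary occupancy measures on $\widehat{\mathcal{M}}^\times$ and invoking the sufficiency of stationary policies for constrained discounted MDPs (in the spirit of the lexicographic scheme of \cite{gabor1998}), so that every finite-memory competitor is matched, probability-for-probability, by a memoryless mixed policy of equal return; the non-attainment exhibited in Example~\ref{example:no_optimal} is precisely why the statement must be phrased with slack $\nu$ and suprema rather than maxima.
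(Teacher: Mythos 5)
Your overall strategy matches the paper's: both proofs proceed by lexicographic action-set pruning, first characterizing the maximally safe policies as those that play only safety-greedy actions (the set $A^\times_\psi$ of \eqref{eq:safe_action_set}), then restricting to the resulting sub-MDP and optimizing the B\"uchi probability, and finally treating the return on the doubly-restricted MDP. Where you differ is in the witnesses and the level of explicitness. For non-emptiness, the paper does not invoke a pure memoryless B\"uchi maximizer; it argues that within the doubly-pruned MDP a pure memoryless selection from $A^\times_{\psi,\varphi}$ can fail (it may get trapped in a recurrent class with intermediate satisfaction probability), and instead exhibits an explicit $\upsilon$-mixture of a uniformly randomizing policy $\pi^\times_u$ with a return-optimal policy $\pi^\times_o$, showing this mixture lies in $\Pi_{\psi,\varphi}^\times$ for every $\upsilon>0$ and enters $\Pi_{\psi,\varphi}^{R\nu\times}$ as $\upsilon\to 0$. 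Your route (pure memoryless B\"uchi optimum on $\widehat{\mathcal{M}}^\times$ via \cite{hahn2013}, then a supremum-definition argument) is also valid for the bare existence claim and is arguably cleaner, but the paper's mixture construction is the one the learning algorithm actually realizes, which is why it appears in the proof. Conversely, you are more careful than the paper about the second half of the statement -- the transfer from $\Pi_{\psi,\varphi}^{R\nu\times}$ back to $\Pi_{\psi,\varphi}^{R\nu}$ -- which the paper leaves entirely implicit in the product construction; your level-by-level verification via the tracking map is a genuine addition. Two caveats there: your ``preservation identity'' $Pr_{\mathcal{M}_\pi}\{\rho\models\varphi\}=Pr_{\mathcal{M}^\times_{\pi^\times}}\{\square\lozenge B_\varphi^\times\}$ is stated too strongly, since the induced path may satisfy $\varphi$ through an accepting LDBA run other than the one selected by $\pi^\times$'s $\varepsilon$-decisions; only the inequality $\geq$ holds policy-by-policy, and the equality of the (constrained) optima is exactly the suitability property of the LDBA from \cite{sickert2016}, which you should cite rather than assert as an identity. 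Second, the equality $v_{\psi,\varphi}^R=v_{\psi,\varphi}^{R\times}$ that you honestly flag as the remaining obstacle is indeed the delicate point; the paper does not prove it either, so your occupancy-measure sketch, if completed, would strengthen rather than merely reproduce the published argument.
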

Before proving this lemma, we first introduce notation for satisfaction probabilities. We write $Pr_{\pi^\times}(s^\times \models \phi)$ for the probability of satisfying $\phi$ after visiting $s^\times$; i.e.,
\vspace{4pt}
\begin{align}
    Pr_{\pi^\times}(s^\times \models \phi) \coloneqq Pr_{\rho_{s^\times}^\times \sim \mathcal{M}_{\pi^\times,s^\times}^\times}\big\{\rho_{s^\times}^\times \mid \rho_{s^\times}^\times \models \phi \big\},
\end{align}
where $\rho_{s^\times}^\times \sim \mathcal{M}_{\pi^\times,s}^\times$ denotes the random path drawn from the $\mathcal{M}_{\pi^\times,s^\times}^\times$, which is obtained from $\mathcal{M}_{\pi^\times}^\times$ by changing its initial state to $s^\times$. Similarly, we denote the satisfaction probability after taking $a^\times$ in $s^\times$ by $Pr_{\pi^\times}\left((s^\times,a^\times) \models \phi\right)$, which can be described as
\vspace{4pt}
\begin{align}
   \hspace{-5px}Pr_{\pi^\times}\hspace{-3px}\left((s^\times\hspace{-3px},a^\times\hspace{-1px}) {\models} \phi\right) \hspace{-2px} {\coloneqq} \hspace{-9px} \sum_{s^\times{'} \in S^\times} \hspace{-7px}P^\times\hspace{-2px}(s^\times\hspace{-3px},a^\times\hspace{-3px},s^\times{'})Pr_{\pi^\times}\hspace{-1px}(s^\times{'} {\models} \phi), \hspace{-4px}
\end{align}
and we denote the maximum satisfaction probabilities by
\vspace{4pt}
\begin{align}
    Pr_{\max}\hspace{-2px}\left((s^\times\hspace{-2px},a^\times) \models \phi\right) \coloneqq \max_{\pi^\times} Pr_{\pi^\times}\hspace{-2px}\left((s^\times\hspace{-2px},a^\times) \models \phi\right).
\end{align}

\vspace{-4pt}
\begin{proof}
Let $A^\times_\psi(s^\times)$ denote the action set maximizing satisfaction probabilities for the safety condition in $s^\times$ defined~as
\vspace{-8pt}
\begin{align}
    A^\times_\psi(s^\times) \coloneqq \argmax_{a^\times \in A^\times(s^\times)} Pr_{\max}\big((s^\times,a^\times)\models \square B^\times_\psi\big). \label{eq:safe_action_set}
\end{align}
A policy $\pi^\times$ maximizes the probability of satisfying the safety condition (i.e., $\pi^\times {\in} \Pi_\psi^\times$) if and only if it does not choose an action $a^\times {\not\in} A^\times_\psi(s^\times)$ (this can be shown using the techniques provided for the reachability probabilities in \cite{baier2008}); thus, all the actions not in $A^\times_\psi(s^\times)$ can be pruned.
We now can define similar action sets for the B\"uchi condition as
\vspace{4pt}
\begin{align}
    A^\times_{\psi,\varphi}(s^\times) \coloneqq \hspace{-1pt} \argmax_{a^\times \in A^\times_\psi(s^\times)} \hspace{-1pt} Pr_{\max}\big((s^\times,a^\times)\models \square \lozenge B^\times_\varphi\big), \label{eq:ltl_action_set}
\end{align}
where the maximum probabilities are obtained for the pruned MDP. Choosing from $A^\times_{\psi,\varphi}(s^\times)$ in state $s^\times$, similar to~the safety condition, is necessary but not sufficient for the B\"uchi condition; the actions should eventually lead to a state in $B^\times_\varphi$ whenever it is possible. This can be overcome by following a policy $\pi^\times$ choosing any action in $A^\times_{\psi,\varphi}(s^\times)$~with some positive probability. Under $\pi^\times$, a state in $B^\times_\varphi$ is eventually reached and visited infinitely many times
(i.e., $\pi^\times {\in} \Pi_{\psi,\varphi}^\times$).

We can now focus on the MDP obtained by pruning all the actions that do not belong $A^\times_{\psi,\varphi}(s^\times)$ in any state $s^\times$. Let $\pi^\times_{u}$ denote a policy that uniformly chooses a random action and $\pi^\times_{o}$ be an optimal policy maximizing the expected QoC return in this pruned MDP. As previously discussed, 
a policy~$\pi^\times_{\upsilon}$ that follows $\pi^\times_{u}$ w.p. $\upsilon$ and $\pi^\times_{o}$ w.p. $1{-}\upsilon$ belongs to $\Pi_{\psi,\varphi}^\times$.
As $\upsilon$ goes to zero, the expected QoC return under $\pi^\times_{\upsilon}$ approaches the maximum expected QoC return that can be obtained in the pruned MDP. Thus, there exists a sufficiently small $\upsilon$ such that $\pi^\times_{\upsilon}$ belongs to $\Pi_{\psi,\varphi}^{R\nu\times}$.
\end{proof}
\vspace{-12pt}

\subsection{Reduction from Safety and B\"uchi Conditions to Rewards}
We provide a reduction from the safety and B\"uchi acceptance conditions to the rewards introduced in~\cite{bozkurt2020} to enable model-free RL to learn lexicographically optimal policies. We note that these crafted reward are independent from the QoC rewards that are native to the original MDP.
The idea behind the reduction is to provide small rewards whenever an acceptance state is visited in order to encourage the repeated visits, and discount %
in a way that ensures the values approach the satisfaction probabilities as the rewards provided goes to zero. Since the reduction requires state-based discounting, we extend the definition of the return in this context as
\vspace{2pt}
\begin{align}
    G_{R^\times,\Gamma^\times}^\times(\rho^\times) \coloneqq \sum_{t=0}^\infty \left( \prod_{i=0}^t \Gamma^\times(\rho^\times[i]) \right) R^\times(\rho^\times[t]), \label{eq:return_updated}
\end{align}
where $R^\times$ and $\Gamma^\times$ are given reward and discount functions.

We further define the near-optimal action set in a state $s^\times$ for given $R^\times$, $\Gamma^\times$ and $\tau>0$ as
\vspace{4pt}
\begin{align}
    \hspace{-100pt}A^\times_{R^\times,\Gamma^\times,\tau}(s^\times) \coloneqq \big\{ a^\times {\in} A^\times(s^\times) \mid
\end{align}
\vspace{-14pt}
\begin{align}
    &\argmax_{\mathfrak{a}^\times{\in} A^\times(s^\times)}q^\times_{\max,R^\times,\Gamma^\times}(s^\times,\mathfrak{a}^\times) {-} q^\times_{\max,R^\times,\Gamma^\times}(s^\times,a^\times) \leq \tau  \big\} \notag
\end{align}

\vspace{2pt}\noindent
where $q^\times_{\max,R^\times,\Gamma^\times}(s^\times,a^\times)$ denotes the maximum expected return for $R^\times$ and $\Gamma^\times$ obtained after taking $a^\times$ in $s^\times$.

The following lemma provides a way to obtain the optimal action sets for the safety condition via learning the near-optimal actions for the reward and discount functions crafted for the condition.

\vspace{-2pt} 
\begin{lemma}\label{lemma:reduction_from_safety}
There exist sufficiently small $r_\psi>0$ and $\tau_\psi>0$ such that for all $s^\times \in S^\times$,
\vspace{3pt}
\begin{align}
    A^\times_\psi(s^\times\hspace{-1pt}) &= \hspace{-1pt}\big\{ a^\times\hspace{-1pt} {\in} A^\times\hspace{-1pt}(s^\times\hspace{-1pt}) \ \big| \  v^\times_\psi\hspace{-1pt}(s^\times\hspace{-1pt}) {-} q^\times_\psi\hspace{-1pt}(s^\times\hspace{-3pt},a^\times\hspace{-1pt}) \leq \tau_\psi  \big\}; \notag
\end{align}
here $A^\times_\psi(s^\times)$ is defined in~\eqref{eq:safe_action_set}, $q^\times_\psi\hspace{-1pt}(s^\times\hspace{-3pt},a^\times\hspace{-1pt})$ denotes the maximum expected return that can be obtained by taking $a^\times$ in $s^\times$ for $\Gamma_\psi^\times(s^\times) := 1-r_\psi$ and 
\vspace{-2pt}
\begin{align}
    R_\psi^\times(s^\times) := \begin{cases}
        r_\psi &  \hspace{-2pt}\textnormal{if }s^\times {\in} B^\times_\psi, \\
        0 & \hspace{-2pt}\textnormal{if }s^\times {\not\in} B^\times_\psi;
    \end{cases} \notag %
\end{align}
finally, $v^\times_\psi\hspace{-1pt}(s^\times\hspace{-1pt}) \coloneqq \max_{\mathfrak{a}^\times\hspace{-1pt}{\in} A^\times\hspace{-1pt}(s^\times\hspace{-1pt})}\hspace{-1pt}q^\times_\psi\hspace{-1pt}(s^\times\hspace{-3pt},\mathfrak{a}^\times\hspace{-1pt})$.
\end{lemma}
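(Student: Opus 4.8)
The plan is to prove the set-equality by showing that, as $r_\psi \to 0$, the maximum safety-reward value $q^\times_\psi(s^\times,a^\times)$ converges to the maximum safety probability $Pr_{\max}\big((s^\times,a^\times) \models \square B^\times_\psi\big)$, and then converting this limit into an exact equality using a finite gap between safety-optimal and safety-suboptimal actions. First I fix $s^\times \in B^\times_\psi$; the case $s^\times \notin B^\times_\psi$ is immediate, since the non-accepting states of $\mathcal{A}_\psi$ are absorbing, so $R_\psi^\times \equiv 0$ from then on and both sides of the claimed equality equal all of $A^\times(s^\times)$. Because a path collects the reward $r_\psi$ exactly while it stays in $B^\times_\psi$, I let $\tau$ denote the first time step at which the path leaves $B^\times_\psi$ (with $\tau = \infty$ on safe paths) and sum the geometric weights in \eqref{eq:return_updated}. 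This gives, for the expected safety return of any memoryless $\pi^\times$ started at $s^\times$,
\[ \mathbb{E}_{\pi^\times}\big[G_{R_\psi^\times,\Gamma_\psi^\times}^\times(\rho^\times)\big] = (1-r_\psi)\,r_\psi \sum_{t=0}^{\infty}(1-r_\psi)^t\, Pr_{\pi^\times}\{\tau > t\}, \]
i.e. a weighted average of the survival probabilities $Pr_{\pi^\times}\{\tau > t\}$ with weights $r_\psi(1-r_\psi)^t$ that sum to one, scaled by $(1-r_\psi)$. The Q-value $q^\times_\psi(s^\times,a^\times)$ is the analogous maximum after fixing the first action $a^\times$, and differs from the state value only by a first-step reward term common to all actions.

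Next I establish the convergence. Let $f_t(s^\times) := \max_{\pi^\times} Pr_{\pi^\times}\{\tau > t\}$ be the maximal probability of remaining in $B^\times_\psi$ for the first $t{+}1$ steps. Value iteration for the safety (co-reachability) problem \cite{baier2008} shows that $f_t(s^\times)$ is non-increasing in $t$ and converges to $Pr_{\max}(s^\times \models \square B^\times_\psi)$. A stationary maximally-safe policy keeps $Pr_{\pi^\times}\{\tau > t\} \ge Pr_{\max}(s^\times \models \square B^\times_\psi)$ for all $t$, which yields the lower bound $v^\times_\psi(s^\times) \ge (1-r_\psi)\,Pr_{\max}(s^\times \models \square B^\times_\psi)$; bounding each survival term by $f_t(s^\times)$ yields the matching upper bound $v^\times_\psi(s^\times) \le (1-r_\psi)\,r_\psi\sum_{t}(1-r_\psi)^t f_t(s^\times)$. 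Since the weights concentrate on large $t$ as $r_\psi \to 0$ while $f_t \downarrow Pr_{\max}(s^\times \models \square B^\times_\psi)$, the weighted average of the $f_t$ converges to this limit, and squeezing gives $v^\times_\psi(s^\times) \to Pr_{\max}(s^\times \models \square B^\times_\psi)$. The same argument after fixing the first action gives $q^\times_\psi(s^\times,a^\times) \to Pr_{\max}\big((s^\times,a^\times) \models \square B^\times_\psi\big)$.

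I expect the uniform-over-policies upper bound to be the main obstacle. The tempting objection is that, since the discount $1-r_\psi$ is nearly $1$, a policy might delay its exit for about $1/r_\psi$ steps and harvest almost the full reward even from a state with safety probability zero, so that $v^\times_\psi$ would fail to converge to $Pr_{\max}$. The resolution is exactly the monotone convergence $f_t \downarrow Pr_{\max}$: in a finite MDP one cannot keep the path inside $B^\times_\psi$ for many steps with non-vanishing probability unless one can keep it there forever, so the survival probabilities feeding the weighted average are themselves forced down to $Pr_{\max}$ on the very time scale that the weights emphasize. This is what makes the term-by-term bound $\max_{\pi^\times}\sum_t (1-r_\psi)^t Pr_{\pi^\times}\{\tau>t\} \le \sum_t (1-r_\psi)^t f_t(s^\times)$ tight enough to pin the limit at $Pr_{\max}$ rather than at $1$.

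Finally I convert the limits into the equality. From the two convergences, $v^\times_\psi(s^\times) - q^\times_\psi(s^\times,a^\times) \to Pr_{\max}(s^\times \models \square B^\times_\psi) - Pr_{\max}\big((s^\times,a^\times) \models \square B^\times_\psi\big)$, whose value is $0$ for $a^\times \in A^\times_\psi(s^\times)$ by \eqref{eq:safe_action_set} and strictly positive otherwise. As there are finitely many pairs $(s^\times,a^\times)$ and the reachability values take finitely many distinct numbers, the gap $\Delta$, defined as the minimum of $Pr_{\max}(s^\times \models \square B^\times_\psi) - Pr_{\max}((s^\times,a^\times) \models \square B^\times_\psi)$ over all $s^\times \in S^\times$ and all $a^\times \in A^\times(s^\times)\setminus A^\times_\psi(s^\times)$, is strictly positive. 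Fixing any $\tau_\psi \in (0,\Delta)$ and then taking $r_\psi$ small enough that the above difference is within $\min(\tau_\psi,\Delta-\tau_\psi)$ of its limit simultaneously for all (finitely many) pairs, every safety-optimal action satisfies $v^\times_\psi(s^\times) - q^\times_\psi(s^\times,a^\times) < \tau_\psi$ while every safety-suboptimal action satisfies $v^\times_\psi(s^\times) - q^\times_\psi(s^\times,a^\times) > \tau_\psi$. This is precisely the claimed identity, and both $r_\psi$ and $\tau_\psi$ can be taken arbitrarily small.
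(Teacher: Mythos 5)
Your proof is correct, and it reaches the same conclusion by doing directly what the paper outsources: the paper's entire proof is an appeal to Theorem~1 of \cite{bozkurt2020} (viewing the safety condition as a special B\"uchi condition) together with an informal statement of exactly the limiting idea you formalize. Concretely, you supply the missing analysis: the survival-time decomposition $\mathbb{E}[G^\times_{R^\times_\psi,\Gamma^\times_\psi}] = (1-r_\psi)\,r_\psi\sum_t (1-r_\psi)^t Pr\{\tau>t\}$ (valid because non-accepting states of $\mathcal{A}_\psi$ are absorbing, so leaving $B^\times_\psi$ is irreversible), the sandwich between the stationary maximally-safe lower bound and the monotone finite-horizon values $f_t \downarrow Pr_{\max}(\cdot \models \square B^\times_\psi)$, and the observation that the geometric weights concentrate on large $t$ precisely where $f_t$ has already collapsed to the safety probability. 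That last point is the real content --- it is what rules out the ``delay the exit for $1/r_\psi$ steps'' objection --- and it is exactly what the cited theorem establishes in greater generality (with state-dependent discounting, needed for the B\"uchi case of Lemma~\ref{lemma:reduction_from_ltl}, where your constant-discount closed form would not apply). The finite-gap argument at the end is identical to the paper's. One cosmetic nit: to get strict separation at the threshold, take the approximation error strictly below $\min(\tau_\psi,\Delta-\tau_\psi)$ (e.g.\ half of it); as written, a suboptimal action could land exactly at $v^\times_\psi - q^\times_\psi = \tau_\psi$ and be wrongly admitted by the non-strict inequality in the lemma. This is a one-word fix and does not affect the validity of the argument.
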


\vspace{-6pt}\begin{proof}
Any safety condition can be considered as a B\"uchi condition and therefore the proof immediately follows from Theorem 1 in~\cite{bozkurt2020}. The idea is that as $r_\psi$ goes to zero, the safety return of a path approaches one if it stays in safe states and zero if it reaches an unsafe state; hence, the maximum expected safety return approaches to the maximum satisfaction probability. Since the MDP is finite, for sufficiently small $\tau_\psi>0$ there must exist $r_\psi>0$ such that $A^\times_\psi(s^\times)$ and the near-optimal action sets are equivalent.
\end{proof}
\vspace{-4pt}

We now establish a similar result for the B\"uchi condition.

\begin{lemma}\label{lemma:reduction_from_ltl}
There exist sufficiently small $r_\varphi{>}0$ and $\tau_\varphi{>}0$ such that for all $s^\times {\in} S^\times$,
$
    A^\times_{\psi,\varphi}(s^\times\hspace{-1pt}) {=} \hspace{-1pt}\{ a^\times\hspace{-1pt} {\in} A^\times_\psi\hspace{-1pt}(s^\times\hspace{-1pt}) {\mid}\allowbreak  v^\times_{\psi,\varphi}\hspace{-1pt}(s^\times\hspace{-1pt}) {-} q^\times_{\psi,\varphi}\hspace{-1pt}(s^\times\hspace{-3pt},a^\times\hspace{-1pt}) {\leq} \tau_\varphi \}
$;
here $A^\times_{\psi,\varphi}(s^\times)$ is from \eqref{eq:ltl_action_set};  $q^\times_{\psi,\varphi}\hspace{-1pt}(s^\times\hspace{-3pt},a^\times\hspace{-1pt})$ denotes the maximum expected return that can be obtained by taking $a^\times$ in $s^\times$ for

\begin{align}
    R_\varphi^\times(s^\times) {:=} \begin{cases}
        r_\varphi &  \hspace{-3pt}\textnormal{if }s^\times {\in} B^\times_\varphi, \\
        0 & \hspace{-3pt}\textnormal{if }s^\times {\not\in} B^\times_\varphi,
    \end{cases} \
    \Gamma_\varphi^\times(s^\times) {:=} \begin{cases}
        1{-}r_\varphi &  \hspace{-3pt}\textnormal{if }s^\times {\in} B^\times_\varphi, \\
        1{-}r_\varphi^2 & \hspace{-3pt}\textnormal{if }s^\times {\not\in} B^\times_\varphi;
    \end{cases} \notag 
\end{align}
in the MDP where all the actions not in $A^\times_\psi(s^\times{'})$ are pruned in each $s^\times{'}$; and $v^\times_{\psi,\varphi}\hspace{-1pt}(s^\times\hspace{-1pt}) \coloneqq \max_{\mathfrak{a}^\times\hspace{-1pt}{\in} A^\times\hspace{-1pt}(s^\times\hspace{-1pt})}\hspace{-1pt}q^\times_{\psi,\varphi}\hspace{-1pt}(s^\times\hspace{-3pt},\mathfrak{a}^\times\hspace{-1pt})$.
\end{lemma}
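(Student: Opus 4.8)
The plan is to reuse the finite-state gap argument of Lemma~\ref{lemma:reduction_from_safety}, but to replace its (essentially trivial) safety convergence with the B\"uchi convergence result of~\cite{bozkurt2020}. The result I would invoke is Theorem~1 in~\cite{bozkurt2020}, applied to the pruned product MDP in which only the actions in $A^\times_\psi(s^\times{'})$ survive in each state: with the eventual-discounting scheme $\Gamma_\varphi^\times$ and the accepting-state reward $R_\varphi^\times$ fixed in the statement, the optimal value functions, and hence the $q$-values $q^\times_{\psi,\varphi}(s^\times,a^\times)$, converge as $r_\varphi \to 0$ to the corresponding maximum B\"uchi satisfaction probabilities $Pr_{\max}\big((s^\times,a^\times)\models \square\lozenge B^\times_\varphi\big)$ computed in the pruned MDP. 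By the definition \eqref{eq:ltl_action_set}, $A^\times_{\psi,\varphi}(s^\times)$ is precisely the set of actions in $A^\times_\psi(s^\times)$ that attain the maximum of this probability.

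First I would fix notation by writing $p^*(s^\times) \coloneqq \max_{\mathfrak{a}^\times \in A^\times_\psi(s^\times)} Pr_{\max}\big((s^\times,\mathfrak{a}^\times)\models \square\lozenge B^\times_\varphi\big)$, so that $a^\times \in A^\times_{\psi,\varphi}(s^\times)$ exactly when its satisfaction probability equals $p^*(s^\times)$. Since $\mathcal{M}^\times$ is finite, the set of state--action pairs is finite, so there is a strictly positive suboptimality gap
\begin{align}
    \Delta \coloneqq \min_{s^\times} \min_{a^\times \in A^\times_\psi(s^\times) \setminus A^\times_{\psi,\varphi}(s^\times)} \big( p^*(s^\times) - Pr_{\max}\big((s^\times,a^\times)\models \square\lozenge B^\times_\varphi\big) \big) > 0, \notag
\end{align}
where states in which every available action is optimal contribute nothing to the minimum (and if this holds everywhere the lemma is immediate, so assume $\Delta$ is well defined).

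Next, using the convergence from Theorem~1 in~\cite{bozkurt2020}, which is pointwise and therefore uniform over the finite pair set, I would choose $r_\varphi$ small enough that $\big|q^\times_{\psi,\varphi}(s^\times,a^\times) - Pr_{\max}\big((s^\times,a^\times)\models \square\lozenge B^\times_\varphi\big)\big| \leq \eta$ for every pair, with $\eta < \Delta/4$. A short computation then separates the two kinds of actions: every $a^\times \in A^\times_{\psi,\varphi}(s^\times)$ satisfies $v^\times_{\psi,\varphi}(s^\times) - q^\times_{\psi,\varphi}(s^\times,a^\times) \leq 2\eta$, while every $a^\times \in A^\times_\psi(s^\times)\setminus A^\times_{\psi,\varphi}(s^\times)$ satisfies $v^\times_{\psi,\varphi}(s^\times) - q^\times_{\psi,\varphi}(s^\times,a^\times) \geq \Delta - 2\eta$; here one uses that the maximizing $q$-value $v^\times_{\psi,\varphi}(s^\times)$ is itself attained by an optimal action and hence lies in $[p^*(s^\times)-\eta, p^*(s^\times)+\eta]$. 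Taking any $\tau_\varphi$ with $2\eta \leq \tau_\varphi < \Delta - 2\eta$, which is possible because $4\eta < \Delta$, makes the near-optimal action set coincide exactly with $A^\times_{\psi,\varphi}(s^\times)$, as claimed.

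The main obstacle is absorbed entirely into the citation: the nontrivial analytic content is the convergence of the eventually-discounted optimal return to the B\"uchi satisfaction probability, which is where the specific discount rates $1-r_\varphi$ and $1-r_\varphi^2$ are needed (they ensure that infinitely many accepting visits accumulate to a return approaching $1$ while finitely many yield a return approaching $0$). Once that convergence is granted, the rest is the same finite-state gap argument as in Lemma~\ref{lemma:reduction_from_safety}; the only point requiring care is that all optimal actions share the satisfaction probability $p^*(s^\times)$ but need not share the same $q$-value, so the tolerance $\eta$ must be taken strictly below a quarter of the gap rather than merely below it, in order to open a window for $\tau_\varphi$ that both contains all optimal actions and excludes all suboptimal ones.
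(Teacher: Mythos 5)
Your proposal is correct and follows essentially the same route as the paper: the paper's proof likewise defers the analytic content to Theorem~1 of~\cite{bozkurt2020} (convergence of the eventually-discounted optimal return to the maximum B\"uchi satisfaction probability in the pruned MDP) and then relies on finiteness of the MDP to separate optimal from suboptimal actions, exactly as in Lemma~\ref{lemma:reduction_from_safety}. You merely make the finite suboptimality-gap argument and the choice of $\tau_\varphi$ explicit, which the paper leaves implicit.
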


\vspace{-4pt}\begin{proof}
The proof follows from Theorem 1 in \cite{bozkurt2020}. Here, we provide the key idea. Unlike the safety condition, the B\"uchi condition requires repeatedly visiting some states in $B^\times_\varphi$ and the frequency of the visits is not important. To capture that, the rewards are discounted less in the states that do not belong to $B^\times_\varphi$; as $r_\varphi$ goes to zero, the discounting due to visiting these states vanishes. In the limit, the B\"uchi return of a path is $1$ only if the path visits some states in $B^\times_\varphi$ infinitely many times, and $0$ otherwise.
\end{proof}
\vspace{-12pt}

\subsection{Model-Free Learning Algorithm}

We now unify these ideas into a single RL algorithm. For any state $s^\times$, our algorithm simultaneously learns the optimal action set
for the safety condition $A^\times_\psi(s^\times)$, and among those actions learns the ones optimal for the B\"uchi condition $A^\times_{\psi,\varphi}(s^\times)$. Additionally, it learns an optimal $\upsilon$-greedy policy that chooses the best action among $A^\times_{\psi,\varphi}(s^\times)$ in $s^\times$ for the QoC objective w.p. $1{-}\upsilon$ and uniformly chooses a random action in $A^\times_{\psi,\varphi}(s^\times)$ w.p. $\upsilon$. 

Our algorithm uses Q-learning to obtain the sets $A^\times_\psi(s^\times)$ and $A^\times_{\psi,\varphi}(s^\times)$ for each state $s^\times$, and SARSA to obtain a near-optimal policy belonging to the set $\Pi_{\psi,\varphi}^{R\nu\times}$. Both Q-learning and SARSA learn an estimate of the value, which is the expected QoC return obtained after taking $a^\times$ in $s^\times$, denoted by $Q(s^\times,a^\times)$ \cite{sutton2018}. The update of these estimates in our approach is shown in Algorithm~\ref{alg:update_q}. After observing a transition $(s^\times, a^\times, r, s^\times{'}, a^\times{'})$, SARSA updates $Q_{\psi,\varphi}^R(s^\times,a^\times)$ using $r + \gamma Q_{\psi,\varphi}^R(s^\times{'},a^\times{'})$ as the target value. Q-learning, on the other hand, updates $Q_\psi(s^\times,a^\times)$ and $Q_{\psi,\varphi}(s^\times,a^\times)$ using $r$ and the maximum value estimate in the next state $s^\times{'}$, independently from the action taken. 

Under regular conditions on the step size $\alpha$ and a proper exploration policy visiting every state action pair %
large number of  
times, Q-learning converges to the optimal values and SARSA converges to the values of the policy being followed \cite{sutton2018}. In our approach, an $\epsilon$-greedy policy where $\epsilon$ gradually decreased to zero is used as the exploration policy. The greedy action to be taken is chosen from the estimate $\hat{A}^\times_{\psi,\varphi}(s^\times)$, which is obtained using $Q_\psi(s^\times,a^\times)$ and $Q_{\psi,\varphi}(s^\times,a^\times)$ as described in Algorithm~\ref{alg:choose_action}. Lastly, an $\upsilon$-greedy policy decides if a random action or a greedy one for the QoC objective will taken from $\hat{A}^\times_{\psi,\varphi}(s^\times)$.

We now show that our overall model-free RL approach summarized in Algorithm~\ref{alg:main} learns a lexicographically near-optimal policy assuming that the step size $\alpha$ and the exploration parameter $\epsilon$ are properly decreasing to 0. 

\begin{theorem}
Consider an MDP $\mathcal{M}$, a safety specification $\psi$, an LTL specification $\varphi$, and a near-optimality parameter $\nu$. Algorithm~\ref{alg:main} converges to a policy $\pi^\times$ for the product MDP $\mathcal{M}^\times$, inducing a policy $\pi$ for $\mathcal{M}$ belonging to the set $\Pi_{\psi,\varphi}^{R\nu}$ from \eqref{eq:safety_ltl_near_optimal_policies}, for sufficiently small positive $r_\psi,r_\varphi,\tau_\psi,\tau_\varphi$ and~$\upsilon$.
\end{theorem}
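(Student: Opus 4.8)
The plan is to reduce everything to Lemma~\ref{lemma:policies_in_product_mdp}: since that lemma already shows that any policy in $\Pi_{\psi,\varphi}^{R\nu\times}$ induces a policy for $\mathcal{M}$ in $\Pi_{\psi,\varphi}^{R\nu}$, it suffices to prove that Algorithm~\ref{alg:main} converges (in the product MDP) to the mixed policy $\pi^\times_\upsilon$ constructed in the proof of that lemma, i.e.\ the policy that in each state $s^\times$ plays the QoC-optimal action $\pi^\times_o$ of the pruned MDP w.p.\ $1{-}\upsilon$ and a uniform action of $A^\times_{\psi,\varphi}(s^\times)$ w.p.\ $\upsilon$. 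I would therefore analyze the three coupled estimates maintained by the algorithm separately: the off-policy $Q_\psi$ and $Q_{\psi,\varphi}$ learned by Q-learning, and the on-policy $Q^R_{\psi,\varphi}$ learned by SARSA.

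First I would fix the exploration and step-size schedule so that $\epsilon\to 0$ slowly enough (GLIE) and $\alpha$ satisfies the Robbins--Monro conditions; together with the $\epsilon$-greedy behavior taking every action of $A^\times(s^\times)$ with positive probability, this guarantees that every state--action pair is visited infinitely often. Standard stochastic-approximation results then give a.s.\ convergence $Q_\psi\to q^\times_\psi$ and $Q_{\psi,\varphi}\to q^\times_{\psi,\varphi}$ (the latter on the safety-pruned MDP). Next I would invoke Lemmas~\ref{lemma:reduction_from_safety} and~\ref{lemma:reduction_from_ltl}: for sufficiently small $r_\psi,\tau_\psi,r_\varphi,\tau_\varphi$ the $\tau$-thresholded sets of the \emph{true} values coincide with $A^\times_\psi(s^\times)$ and $A^\times_{\psi,\varphi}(s^\times)$ from~\eqref{eq:safe_action_set}--\eqref{eq:ltl_action_set}. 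Because the product MDP is finite, the excluded actions are separated from the kept ones by a strictly positive value gap, so once the estimates are close enough the computed set $\hat{A}^\times_{\psi,\varphi}(s^\times)$ equals the true set from some finite (random) time onward a.s.; I would first argue this stabilization for $\hat{A}^\times_\psi$, after which the B\"uchi Q-learning is effectively running on the correctly pruned MDP and $\hat{A}^\times_{\psi,\varphi}$ stabilizes as well.

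With the action set eventually correct, the remaining step is the SARSA component. After stabilization the behavior and target policies live in the pruned MDP, and as $\epsilon\to 0$ the behavior policy tends to the $\upsilon$-greedy policy; on-policy SARSA then converges to the action-values of that policy. Appealing once more to finiteness --- the QoC-optimal action of the pruned MDP enjoys a strict advantage gap --- I would conclude that for sufficiently small $\upsilon$ the greedy action selected from $\hat{A}^\times_{\psi,\varphi}(s^\times)$ coincides with $\pi^\times_o$, so the learned policy is exactly $\pi^\times_\upsilon$. By the construction in the proof of Lemma~\ref{lemma:policies_in_product_mdp}, $\pi^\times_\upsilon\in\Pi_{\psi,\varphi}^{R\nu\times}$ for small $\upsilon$, and hence its induced policy lies in $\Pi_{\psi,\varphi}^{R\nu}$, which is the claim.

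The hard part will be the coupling and non-stationarity of the three learners: they are all updated along a single trajectory generated by a behavior policy whose greedy structure depends on the current, noisy estimates, so the pruned action sets and the SARSA target policy keep changing during learning. The crux is an ``eventually correct'' (two-timescale in spirit) argument showing that, a.s., the identification of $A^\times_\psi$ and then $A^\times_{\psi,\varphi}$ settles after finite time --- this is where the strict value gaps of the finite MDP are essential --- so that the on-policy SARSA evaluation ultimately sees an essentially stationary restricted MDP. Making the $\upsilon$-greedy SARSA limit rigorous, in particular that its self-consistent fixed point selects the optimal pruned action once $\upsilon$ is small, is the most delicate point and the place I would spend the most care.
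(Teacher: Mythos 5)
Your proposal follows essentially the same route as the paper's own proof, which simply invokes Lemma~\ref{lemma:reduction_from_safety}, Lemma~\ref{lemma:reduction_from_ltl} and the convergence of Q-learning to conclude that $\hat{A}^\times_\psi$ and $\hat{A}^\times_{\psi,\varphi}$ converge to the true action sets, and then Lemma~\ref{lemma:policies_in_product_mdp} to obtain near-optimality for small $\upsilon$. Your treatment is in fact more careful than the paper's two-sentence argument --- in particular your explicit handling of the coupling and non-stationarity of the three learners, the finite-time stabilization of the pruned action sets via strict value gaps, and the on-policy SARSA limit are all points the paper leaves implicit.
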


\begin{proof}
Due to Lemma~\ref{lemma:reduction_from_safety}, Lemma \ref{lemma:reduction_from_ltl} and the convergence of Q-learning, the estimate $\hat{A}^\times_\psi(s^\times)$ and thereby $\hat{A}^\times_{\psi,\varphi}(s^\times)$ converge to $A^\times_\psi(s^\times)$ and $A^\times_{\psi,\varphi}(s^\times)$, respectively for sufficiently small positive $r_\psi,r_\varphi,\tau_\psi,\tau_\varphi$. Similarly, due to Lemma~\ref{lemma:policies_in_product_mdp}, there exists a $\upsilon>0$ ensuring the near-optimality.
\end{proof}
\vspace{-5pt}

\begin{figure*}
\begin{minipage}{0.3\textwidth}
\begin{algorithm}[H]
\caption{Model-free RL for lexicographical safety, LTL and QoC objectives}
\label{alg:main}
\begin{algorithmic}
\itemindent=-10pt
\begin{footnotesize}
\STATE \textbf{Input:} Safety $\psi$, LTL $\varphi$, MDP $\mathcal{M}$
\STATE Translate $\psi$ and $\varphi$ to $\mathcal{A}_\psi$ and $\mathcal{A}_\varphi$
\STATE Construct $\mathcal{M}^\times$ of $\mathcal{M}$, $\mathcal{A}_\psi$ and $\mathcal{A}_\varphi$
\STATE Initialize $Q_\psi$, $Q_{\psi,\varphi}$ and $Q_{\psi,\varphi}^R$ on $\mathcal{M}^\times$
\FOR {$i=0,1,\dots$}
\FOR {$t=0$ {\bfseries to} $\mathcal{T}-1$}
\STATE $a_t^\times {\leftarrow} \textnormal{get\_action}$($Q_\psi, Q_{\psi,\varphi}, Q_{\psi,\varphi}^R$, $s_t^\times$)
\STATE Take $a_t^\times$ and observe $r_t$ and $s^\times_{t+1}$
\IF{$t>0$}
\STATE \textnormal{update\_Qs}($Q_\psi, Q_{\psi,\varphi}, Q_{\psi,\varphi}^R,$
\STATE \hspace{20pt}$(s^\times_{t-1}, a^\times_{t-1}, r_{t-1}, s^\times_t, a^\times_t)$)\vspace{-4pt}
\ENDIF \vspace{-1pt}
\ENDFOR \vspace{-4pt}
\ENDFOR
\end{footnotesize}
\end{algorithmic}
\end{algorithm}
\end{minipage}
\hfill
\begin{minipage}{0.32\textwidth}
\begin{algorithm}[H]
\caption{update\_Qs}
\label{alg:update_q}
\begin{algorithmic}
\itemindent=-10pt
\begin{footnotesize}
\vspace{-1pt}
\STATE \textbf{Input}: $Q_\psi, Q_{\psi,\varphi}, Q_{\psi,\varphi}^R,(s^\times, a^\times, r, s^\times{'}, a^\times{'})$\vspace{-1pt}
\IF{$s^\times{'} {\in} B_\psi^\times$}\vspace{-1pt}
\STATE $Q_\psi(s^\times,a^\times) \leftarrow  (1{-}\alpha)Q_\psi(s^\times,a^\times)$\vspace{-1pt}
\STATE \hspace{2pt}${+} \alpha (r_\psi {+} (1{-}r_\psi) \max_{\mathfrak{a}^\times} Q_\psi(s^\times{'},\mathfrak{a}^\times))$\vspace{-4pt}
\ELSE \vspace{-1pt}
\STATE $Q_\psi(s^\times,a^\times) \leftarrow  (1-\alpha)Q_\psi(s^\times,a^\times)$
\ENDIF
\IF{$s^\times{'} {\in} B_\varphi^\times$}\vspace{-1pt}
\STATE $Q_{\psi,\varphi}(s^\times,a^\times) \leftarrow  (1{-}\alpha)Q_{\psi,\varphi}(s^\times,a^\times)$\vspace{-1pt}
\STATE ${+}\alpha (r_\varphi {+} (1{-}r_\varphi) \max_{\mathfrak{a}^\times} Q_{\psi,\varphi}(s^\times{'},\mathfrak{a}^\times))$\vspace{-4pt}
\ELSE \vspace{-1pt}
\STATE $Q_{\psi,\varphi}(s^\times,a^\times) \leftarrow  (1{-}\alpha)Q_{\psi,\varphi}(s^\times,a^\times)$ \vspace{-1pt}
\STATE $+ \alpha ((1{-}r_\varphi^2) \max_{\mathfrak{a}^\times} Q_{\psi,\varphi}(s^\times{'},\mathfrak{a}^\times))$\vspace{-1pt}
\ENDIF
\STATE $Q_{\psi,\varphi}^R(s^\times,a^\times) \leftarrow  (1{-}\alpha)Q_{\psi,\varphi}^R(s^\times,a^\times)$\vspace{-1pt}
\STATE \hspace{5pt} ${+} \alpha (r {+} \gamma Q_{\psi,\varphi}^R(s^\times{'},a^\times{'}))$\vspace{-1pt}
\end{footnotesize}
\end{algorithmic}
\end{algorithm}
\end{minipage}
\hfill
\begin{minipage}{0.36\textwidth}
\begin{algorithm}[H]
\caption{choose\_action}
\label{alg:choose_action}
\begin{algorithmic}
\itemindent=-10pt
\begin{footnotesize}
\STATE \textbf{Input:} $Q_\psi, Q_{\psi,\varphi}, Q_{\psi,\varphi}^R$ and $s^\times$ 
\STATE $u \sim \mathcal{U}(0,1)$
\IF{$u\leq \epsilon$}
\STATE \textbf{return} a random action $a^\times \in A^\times(s^\times)$
\ENDIF 
\STATE $V_\psi(s^\times) {\leftarrow} \max_{a^\times {\in} A^\times(s^\times)} Q_\psi(s^\times,\mathfrak{a}^\times)$
\STATE $\hat{A}^\times_\psi\hspace{-1pt}(\hspace{-1pt}s^\times\hspace{-1pt}) {\leftarrow}  \{ a^\times \hspace{-1pt}{\in} A^\times\hspace{-1pt}(\hspace{-1pt}s^\times\hspace{-1pt}) \mid V_\psi\hspace{-1pt}(\hspace{-1pt}s^\times\hspace{-1pt}) {-} Q_\psi\hspace{-1pt}(\hspace{-1pt}s^\times\hspace{-3pt},a^\times\hspace{-1pt})\hspace{-1pt} {\leq} \tau_\psi \}$
\STATE $V_{\psi,\varphi}(s^\times) {\leftarrow} \max_{a^\times {\in} \hat{A}_\psi^\times(s^\times)} Q_{\psi,\varphi}(s^\times,\mathfrak{a}^\times)$\vspace{-1pt}
\STATE $\hat{A}^\times_{\hspace{-1pt}\psi\hspace{-1pt},\hspace{-1pt}\varphi}\hspace{-1pt}(\hspace{-1pt}s^\times\hspace{-1pt}) {\leftarrow} \{\hspace{-1pt} a^\times \hspace{-2pt}{\in}\hspace{-1pt} \hat{A}_{\hspace{-1pt}\psi}^\times\hspace{-2pt}(\hspace{-1pt}s^\times\hspace{-1pt}) {\mid} V_{\hspace{-1pt}\psi\hspace{-1pt},\hspace{-1pt}\varphi}\hspace{-1pt}(\hspace{-1pt}s^\times\hspace{-1pt}) {-} Q_{\hspace{-1pt}\psi\hspace{-1pt},\hspace{-1pt}\varphi}\hspace{-1pt}(s^\times\hspace{-4pt},a^\times\hspace{-2pt}) {\leq} \tau_\varphi \}$
\IF{$u\leq \epsilon+\upsilon$}
\STATE \textbf{return} a random action $a^\times \in \hat{A}^\times_{\psi,\varphi}(s^\times)$
\ENDIF\vspace{-4pt}
\STATE \textbf{return} $a^\times {\in} \argmax_{\mathfrak{a}^\times {\in} \hat{A}^\times_{\psi,\varphi}(s^\times)} Q_{\varphi,\psi}^R(s^\times,a^\times)$
\end{footnotesize}
\end{algorithmic}
\end{algorithm}
\end{minipage}
\end{figure*}

\section{Case Study}
\label{sec:case}
\vspace{-2pt}
 
We implemented Algorithm~\ref{alg:main} on top of \emph{CSRL}~\cite{bozkurt2020a,bozkurt2021}. We set the discount factor $\gamma{=}0.99$, safety reward $r_\psi{=}0.0001$, and the LTL reward $r_\varphi{=}0.01$. In addition, we initialized the learning rate $\alpha$, safety threshold $\tau_\psi$,  LTL threshold $\tau_\varphi$, and the parameter~$\upsilon$ for the LTL objective to $0.5$, and gradually decreased them to $0.05$. Similarly, we slowly decreased the exploration parameter $\epsilon$ from $0.5$ to $0.005$.

\begin{figure}[!t]
  \begin{minipage}[c]{0.6\columnwidth}
    \includegraphics[width=1\columnwidth]{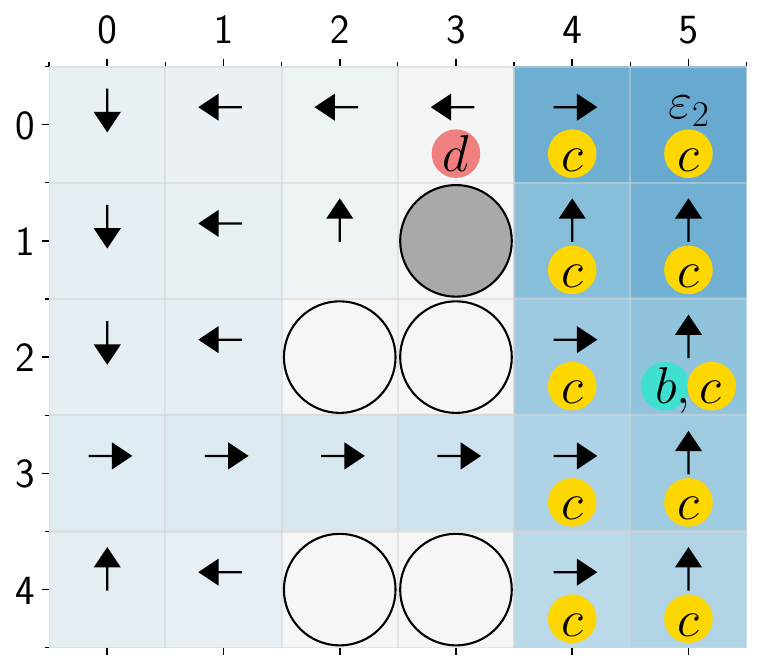}
  \end{minipage}\hfill
  \begin{minipage}[c]{0.39\columnwidth}
    \caption{The grid world with the learned policy. Empty circles: absorbing states. Filled circle: obstacle. Encircled letters: labels. Arrows: actions. Estimated values are represented by the shades of blue; the darker, the higher~value.} \label{fig:lexicographic_policy}
  \end{minipage}
  \vspace{-10pt}
\end{figure}

We conducted our numerical experiments on the $5\times6$ grid world shown in Fig.~\ref{fig:lexicographic_policy}, where each cell represents a state. The agent starts in the cell $(0,0)$ moves from one cell to one of its neighbors via four actions: \emph{up}, \emph{down}, \emph{right} and \emph{left}. When an action is taken, the agent moves in the intended direction w.p. $0.8$; and moves sideways w.p. $0.2$ ($0.1$ for each direction). If the direction is blocked by an obstacle or the boundary of the grid, the agent stays in the same cell. Lastly, if the agent moves into an absorbing state it cannot~leave.

The safety specification is $\psi=\square \neg(d \wedge \bigcirc d)$, which means, in the context of this grid, the cell $(0,3)$ should not be visited consecutively. The LTL specification is $\varphi= \square \lozenge b \wedge \lozenge \square c$, which can be interpreted as the right part of the grid must be eventually reached and must not be left after some time steps, and the cell $(2,5)$ must be repeatedly visited. Both formulas are translated to a 3-state automaton.  All the rewards are zero except for the reward of the cell $(0,5)$, which is 1.

There are two ways to reach the right part of the grid to satisfy $\varphi$: (i) through the cell labeled with $d$ ($(0,3)$) and (ii) through the cells between the absorbing states ($(3,2)$, $(3,3)$). Through (i), the right part can be reached w.p. 1 ($Pr_{\max}(s_0 {\models} \varphi){=}1$); however, the safety property  $\psi$ can be violated w.p. 0.2 ($Pr_{\max}(s_0 {\models} \psi){=}0.8$), due to the probability that the agent goes sideways and hit the boundary or the obstacle and stays in $(0,3)$ in two consecutive time steps. Through (ii), $\psi$ is guaranteed to be satisfied ($Pr_{\max}(s_0 {\models} \psi)=1$) although the probability that the agent gets stuck in one of the absorbing states is $0.36$ ($Pr_{\max}(s_0 {\models} \varphi)=0.64$). Thus, a policy prioritizing the safety chooses (ii) over (i) while a policy with a single objective $\varphi'=\psi \wedge \varphi$ prefers (i) because $Pr_{\max}(s_0 {\models} \varphi')=0.8$ through (i) and $Pr_{\max}(s_0 {\models} \varphi')=0.64$ through (ii).

The policy shown in Fig.~\ref{fig:lexicographic_policy} is learned after $128K$ ($K=2^{10}$) episodes, each with a length of $1K$. The agent learned to eliminate the actions other than \emph{left} in $(0,2)$ and the ones other than \emph{right} in $(0,4)$ for satisfaction of $\psi$. Similarly, except for $(3,2)$ and $(3,3)$, the agent prunes all the actions that might lead to an absorbing state due to $\varphi$. Since the only positive reward can be obtained in $(0,5)$, the policy aims to reach $(0,5)$ as soon as possible using the remaining actions. Once reached, the policy takes the action $\varepsilon_2$ to change the LDBA state to satisfy the B\"uchi condition. In addition, it takes a random action w.p. $\upsilon=0.05$ in $(0,5)$, $(1,5)$, $(2,5)$, $(3,5)$, $(4,5)$ and $(1,4)$, which ensures visiting $(2,5)$, the cell labeled with $b$ and $c$, infinitely many times. The maximum expected QoC return that can be obtained after visiting $(0,5)$ is about $88$, and approximately a QoC return of $85$ is obtained by following the learned policy. 

\vspace{-2pt}
\section{Conclusion}
\label{sec:conclusion}
\vspace{-4pt}
In this paper, we have introduced a model-free RL method that learns near-optimal policies for given safety, LTL and QoC objectives, where the safety objective is prioritized above the LTL objective, which is prioritized above the QoC objective. Our algorithm learns the safest actions and among those learns the actions sufficient to maximize the probability of satisfying the LTL objective. Finally, using these actions, our algorithm converges to a policy maximizing the expected return for a given parameter $\upsilon$ controlling the randomness of the obtained  policy. %
We have shown that as long as $\upsilon$ is positive, the satisfaction probability for the LTL objective is maximized and as $\upsilon$ goes to zero, the expected return approaches its supremum value. Lastly, we have demonstrated the applicability of our algorithm on a case study.

\vspace{-4pt}
\bibliography{references.bib,ref_yu.bib}
\bibliographystyle{unsrt}
\end{document}